\newcommand{\ArchCondText}{Architecture condition}
\newtheorem{theorem}{Theorem}
\newtheorem{lemma}{Lemma}
\newtheorem{fact}{Fact}
\theoremstyle{definition}
\newtheorem{definition}{Definition}
\theoremstyle{remark}
\newcommand{\titleText}{Universal Approximation Property of \\ Neural Ordinary Differential Equations}
\newcommand{\titleSentenceCase}{Universal approximation property of neural ordinary differential equations.}
\title{\titleText{}}
\author{
    Takeshi Teshima\\
    The University of Tokyo, RIKEN\\
    \texttt{teshima@ms.k.u-tokyo.ac.jp}\\
    \And
    Koichi Tojo\\
    RIKEN\\
    \texttt{koichi.tojo@riken.jp}\\
    \And
    Masahiro Ikeda\\
    RIKEN\\
    \texttt{masahiro.ikeda@riken.jp}\\
    \And
    Isao Ishikawa\\
    Ehime University, RIKEN\\
    \texttt{ishikawa.isao.zx@ehime-u.ac.jp}\\
    \And
    Kenta Oono\\
    The University of Tokyo\\
    \texttt{kenta\_oono@mist.i.u-tokyo.ac.jp}\\
}
\newcommand{\acknowledgmentContent}{
The authors would like to thank the anonymous reviewers for the insightful discussions.
This work was supported by RIKEN Junior Research Associate Program.
TT was supported by Masason Foundation.
II and MI were supported by CREST:JPMJCR1913.
}
\newcommand{\status}[1]{} \newcommand{\Restrict}[2]{#1\vert_{#2}}
\def \R {\mathbb{R}}
\def \Re {\mathbb{R}}
\newcommand{\Na}{\mathbb{N}}
\newcommand{\Identity}{\mathrm{Id}}
\newcommand{\x}{\mbox{\boldmath $x$}}
\newcommand{\y}{\mbox{\boldmath $y$}}
\newcommand{\LpKnorm}[1]{\left\Vert #1 \right\Vert_{p, K}}
\newcommand{\supRangenorm}[2]{\left\Vert #2 \right\Vert_{\sup, #1}}
\newcommand{\inftyKnorm}[1]{\supRangenorm{K}{#1}}
\newcommand{\supKnorm}[1]{\supRangenorm{K}{#1}}
\newcommand{\Euclideannorm}[1]{\left\Vert #1 \right\Vert}
\newcommand{\opnorm}[1]{\left\Vert #1 \right\Vert_{\mathrm{op}}}
\newcommand{\Dtwo}{\CtwoDomainDiff}
\newcommand{\DcRDCmd}[1]{\mathrm{Diff}^{#1}_\mathrm{c}}
\newcommand{\DcRD}{\DcRDCmd{2}}
\newcommand{\DcRDr}{\DcRDCmd{r}}
\newcommand{\CtwoDomainDiff}{{\mathcal{D}^2}}
\newcommand{\FlowEnds}[1]{S^{#1}}
\newcommand{\ReD}{\Re^d}
\newcommand{\Affine}{\mathrm{Aff}}
\newcommand{\Lipsp}{\operatorname{Lip}(\R^d)}
\newcommand{\LipConst}[1]{L_{#1}}
\newcommand{\supp}[1]{\mathrm{supp}\ #1} \newcommand{\INN}[1]{\mathrm{INN}_{#1}}
\newcommand{\ARFINNModel}{\mathcal{M}}
\newcommand{\INNModelGeneric}{\mathcal{M}} \newcommand{\FLin}{\mathrm{Aff}}
\newcommand{\FNODE}[1]{#1\text{-}\mathrm{NODE}}
\newcommand{\NODEJacobiFuncClass}{\mathcal{H}}
\newcommand{\HFNODE}{\FNODE{\NODEJacobiFuncClass}}
\newcommand{\INNHNODE}{\INN{\HFNODE}}
\newcommand{\IVPFunc}[1]{\mathrm{IVP}[#1]}
\newcommand{\IVP}[3]{\IVPFunc{#1}(#2, #3)}
\newcommand{\ODEFlowEnds}[1]{\Psi(#1)} \newcommand{\INNs}{INNs}
\newcommand{\NODEs}{NODEs}
\newcommand{\Gronwall}{Gr\"{o}nwall}
 \newcommand{\targetError}[1]{\Delta_{\x_0}(#1)}
\newcommand{\BoundDef}{\delta e^{\LipConst{F}}}
\newcommand{\Bound}{B}  
\begin{document}
\maketitle
\begin{abstract}
Neural ordinary differential equations (NODEs) is an invertible neural network architecture promising for its free-form Jacobian and the availability of a tractable Jacobian determinant estimator.
Recently, the representation power of NODEs has been partly uncovered: they form an $L^p$-universal approximator for continuous maps under certain conditions.
However, the $L^p$-universality may fail to guarantee an approximation for the entire input domain as it may still hold even if the approximator largely differs from the target function on a small region of the input space.
To further uncover the potential of NODEs, we show their stronger approximation property, namely the \emph{$\sup$-universality} for approximating a large class of diffeomorphisms. It is shown by leveraging a structure theorem of the diffeomorphism group, and the result complements the existing literature by establishing a fairly large set of mappings that NODEs can approximate with a stronger guarantee.
\end{abstract}

\section{Introduction}
\emph{Neural ordinary differential equations} (NODEs) \cite{ChenNeural2018a} are a family of deep neural networks that indirectly model functions by transforming an input vector through an ordinary differential equation (ODE).
When viewed as an invertible neural network (INN) architecture, NODEs have the advantage of having free-form Jacobian, i.e., it is invertible without restricting the Jacobian's structure, unlike other INN architectures \cite{PapamakariosNormalizing2019}.
For the out-of-box invertibility and the availability of a tractable unbiased estimator of the Jacobian determinant \cite{GrathwohlFFJORD2018}, NODEs have been used for constructing \emph{continuous normalizing flows} for generative modeling and density estimation \cite{ChenNeural2018a,GrathwohlFFJORD2018,FinlayHow2020}.

Recently, the representation power of NODEs has been partly uncovered in \citet{LiDeep2020}, namely, a sufficient condition for a family of NODEs to be an \emph{\(L^p\)-universal approximator} (see Definition~\ref{def: sup universality}) for continuous maps has been established.
However, the universal approximation property with respect to the $L^p$-norm can be insufficient as it does not guarantee an approximation for the entire input domain: $L^p$ approximation may still hold even if the approximator largely differs from the target function on a small region of the input space.

In this work, we elucidate that the NODEs are a $\sup$-universal approximator (Definition~\ref{def: sup universality}) for a fairly large class of \emph{diffeomorphisms}, i.e., smooth invertible maps with smooth inverse.
Our result establishes a function class that can be approximated using NODEs with a stronger guarantee than in the existing literature \cite{LiDeep2020}.
We prove the result by using a structure theorem of \emph{differential geometry} to represent a diffeomorphism as a finite composition of \emph{flow endpoints}, i.e., diffeomorphisms that are smooth transformations of the identity map.
The NODEs are themselves examples of flow endpoints, and we derive the main result by approximating the flow endpoints by the NODEs. \section{Preliminaries and goal}
In this section, we define the family of NODEs considered in the present paper as well as the notion of universality.

\subsection{Neural ordinary differential equations (NODEs)}
Let $\Re$ (resp. $\Na$) denote the set of all real values (resp. all positive integers).
Throughout the paper, we fix $d \in \Na$.
Let $\Lipsp{}:=\{ f\colon \R^d\to \R^d\ |\ f \text{ is Lipschitz continuous} \}$.
It is known that any \emph{autonomous} ODE (i.e., one that is defined by a time-invariant vector field) with a Lipschitz continuous vector field has a solution and that the solution is unique:
\begin{fact}[Existence and uniqueness of a global solution to an ODE \cite{Derrickglobal1976}]\label{fact:ODE solution exists for Lip}
Let $f \in \Lipsp{}$.
Then, a solution $z\colon \Re\to \R^d$ to the following ordinary differential equation exists and it is unique:
\begin{equation}\label{eq:initial value problem}
z(0) = \x, \quad \dot{z}(t)= f(z(t)), \quad t \in \Re,
\end{equation}
where $\x\in \R^d$, and $\dot{z}$ denotes the derivative of $z$.
\end{fact}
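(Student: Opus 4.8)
The plan is to establish this via the classical Picard--Lindel\"{o}f (Banach fixed-point) argument: recast the ODE as an equivalent integral equation, obtain a solution on a compact time interval by a contraction mapping, and then promote it to a global solution by exploiting that the Lipschitz constant $L := \mathrm{Lip}(f)$ is \emph{uniform} over all of $\R^d$.

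First I would observe that a continuously differentiable $z$ solves \eqref{eq:initial value problem} if and only if it solves the integral equation $z(t) = \x + \int_0^t f(z(s))\,\mathrm{d}s$, the equivalence being immediate from the fundamental theorem of calculus together with the continuity of $t \mapsto f(z(t))$. The payoff is that the right-hand side defines an operator acting on merely continuous functions, so differentiability need not be imposed a priori and is recovered afterward.

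Next, fixing $T > 0$, I would work on the Banach space $C([-T,T];\R^d)$ and define the Picard operator $(\mathcal{T}z)(t) := \x + \int_0^t f(z(s))\,\mathrm{d}s$. The key estimate is that for two candidates $z_1, z_2$ the Lipschitz bound yields $\Euclideannorm{(\mathcal{T}z_1)(t) - (\mathcal{T}z_2)(t)} \le L \int_0^{|t|} \Euclideannorm{z_1(s) - z_2(s)}\,\mathrm{d}s$. On a sufficiently short interval this already makes $\mathcal{T}$ a contraction in the sup-norm; to cover the whole of $[-T,T]$ in a single step I would instead equip the space with the weighted (Bielecki) norm $\sup_{|t|\le T} e^{-\lambda|t|}\Euclideannorm{z(t)}$ for $\lambda > L$, under which $\mathcal{T}$ becomes a contraction on the entire interval. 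The Banach fixed-point theorem then furnishes a unique fixed point, i.e., a unique solution on $[-T,T]$.

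Finally, to obtain a solution on all of $\R$, I would note that uniqueness forces the solutions on nested intervals $[-T,T]$ to agree on overlaps, so they patch into a well-defined $z\colon\R\to\R^d$. The hard part — and the step to guard against — is ruling out finite-time blow-up, which for a merely \emph{locally} Lipschitz field could force the maximal interval to shrink. Here the global hypothesis rescues us: Lipschitz continuity gives the linear growth bound $\Euclideannorm{f(\y)} \le \Euclideannorm{f(0)} + L\Euclideannorm{\y}$, and \Gronwall{}'s inequality then confines $\Euclideannorm{z(t)}$ to a finite range on every bounded interval. Equivalently, because $L$ is the same everywhere, the contraction step applies on intervals of a fixed length regardless of the base point, so the local solution continues indefinitely in both time directions, yielding the asserted global solution on $\R$.
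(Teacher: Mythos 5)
Your proof is correct, but note that the paper itself offers no argument to compare against: the statement is labeled a Fact and imported wholesale from the cited work of Derrick and Janos (1976), so the paper's ``route'' is simply a citation. What you supply is the classical Picard--Lindel\"{o}f argument made global in one stroke: reformulate the ODE as the integral equation $z(t)=\x+\int_0^t f(z(s))\,ds$, apply the Banach fixed-point theorem to the Picard operator on $C([-T,T];\R^d)$ equipped with the Bielecki norm $\sup_{|t|\le T}e^{-\lambda|t|}\Euclideannorm{z(t)}$ with $\lambda>L$, and observe that the contraction constant $L/\lambda$ is independent of $T$ precisely because $f$ is globally (not merely locally) Lipschitz; uniqueness of the fixed point then patches the solutions on nested intervals into a single solution on all of $\Re$. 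Two minor points: (i) your displayed contraction estimate should have the integral taken between $0$ and $t$ with an absolute value, i.e.\ $\Euclideannorm{(\mathcal{T}z_1)(t)-(\mathcal{T}z_2)(t)}\le L\,\bigl|\int_0^t \Euclideannorm{z_1(s)-z_2(s)}\,ds\bigr|$, rather than an integral from $0$ to $|t|$ of $\Euclideannorm{z_1(s)-z_2(s)}$ --- a purely cosmetic fix, since the weighted-norm computation proceeds with $e^{\lambda|s|}$ either way; (ii) your closing discussion of finite-time blow-up (linear growth of $f$ plus \Gronwall{}'s inequality, or continuation in steps of fixed length) is correct but redundant in your own setup, because the Bielecki contraction already yields existence and uniqueness on $[-T,T]$ for \emph{every} finite $T$, so no separate continuation or blow-up argument is needed. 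Your argument is a sound, self-contained substitute for the result the paper outsources, and it is in the same fixed-point spirit as the reference it cites.
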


In view of Fact~\ref{fact:ODE solution exists for Lip}, we use the following notation.
\begin{definition}
\label{def:ivp}
For $f \in \Lipsp{}$, $\x \in \R^d$, and $t \in \Re$, we define
\[
\IVP{f}{\x}{t} := z(t),
\]
where $z: \Re \to \R^d$ is the unique solution to Equation~\eqref{eq:initial value problem}.
\end{definition}

\begin{definition}[Autonomous-ODE flow endpoints; \citet{LiDeep2020}]\label{def: autonomous ODE flow endpoints}
For $\mathcal{F}\subset \Lipsp{}$, we define
\[
\ODEFlowEnds{\mathcal{F}}:= \{\IVP{f}{\cdot}{1} \ |\ f\in \mathcal{F}\}.
\]
\end{definition}

\begin{definition}[\(\INNHNODE\)]
Let $\Affine$ denote the group of all invertible affine maps on $\R^d$,
and let \(\NODEJacobiFuncClass \subset \Lipsp{}\). Define the invertible neural network architecture based on NODEs as
\[
\INNHNODE := \{W \circ \psi_{k} \circ \cdots \circ \psi_{1} \ |\ \psi_1, \ldots, \psi_k \in \ODEFlowEnds{\NODEJacobiFuncClass{}}, W \in \Affine{}, k \in \Na\}.
\]
\end{definition}

\subsection{Goal: the notions of universality and their relations}
Here, we define the notions of universality.
Let \(m, n \in \Na\).
For a subset $K\subset\Re^m$ and a map $f: K \to \Re^n$, we define $\inftyKnorm{f}:=\sup_{x\in K}\Euclideannorm{f(x)}$, where $\|\cdot\|$ denotes the Euclidean norm.
Also, for a measurable map $f:\Re^m\to\Re^n$, a subset $K\subset\Re^m$, and $p\in [1, \infty)$, we define \(\LpKnorm{f} := \left(\int_K \Euclideannorm{f(x)}^p dx\right)^{1/p}\).
\begin{definition}[\(\sup\)-universality and \(L^p\)-universality]
\label{def: sup universality}
Let $\INNModelGeneric$ be a model, which is a set of measurable mappings from $\Re^m$ to $\Re^n$.
Let $\mathcal{F}$ be a set of measurable mappings $f:U_f\rightarrow\Re^n$, where $U_f$ is a measurable subset of $\Re^m$, which may depend on $f$.
We say that $\INNModelGeneric$ is a \emph{$\sup$-universal approximator} or \emph{has the $\sup$-universal approximation property} for $\mathcal{F}$ if for any $f\in \mathcal{F}$, any $\varepsilon>0$, and any compact subset $K\subset U_f$, there exists $g\in \INNModelGeneric$ such that $\supKnorm{f - g}<\varepsilon$.
The \(L^p\)-universal approximation property is defined by replacing \(\supKnorm{\cdot}\) with \(\LpKnorm{\cdot}\) in the above.
\end{definition}

\paragraph{Our goal.}
Our goal is to elucidate the representation power of \INNs{} composed of \NODEs{} by proving the \(\sup\)-universality of $\INNHNODE$ for a fairly large class of \emph{diffeomorphisms}, i.e., smooth invertible functions with smooth inverse.

 \section{Main result}
In this section, we present our main result, Theorem~\ref{thm: NODE is sup-universal}.

First, we define the following class of invertible maps, which will be our target to be approximated.
\begin{definition}[\(C^2\)-diffeomorphisms: $\CtwoDomainDiff$]
\label{def: D}
We define $\CtwoDomainDiff$ as the set of all $C^2$-diffeomorphisms $f:U_f\rightarrow {\rm Im}(f)\subset\ReD$ 
, where $U_f \subset \ReD$ is open and $C^2$-diffeomorphic to $\ReD$, and it may depend on $f$.
\end{definition}
The set $\CtwoDomainDiff$ is a fairly large class: it contains any \(C^2\)-diffeomorphism defined on the entire \(\ReD\), an open convex set, or more generally, a star-shaped open set.

Now, we state our main result to establish a class that the invertible neural networks based on NODEs can approximate with respect to the \(\sup\)-norm.
\begin{theorem}[Universality of NODEs]\label{thm: NODE is sup-universal}
Assume \(\NODEJacobiFuncClass \subset \Lipsp{}\) is a $\sup$-universal approximator for $\Lipsp{}$.
Then, \(\INNHNODE\) is a \(\sup\)-universal approximator for \(\Dtwo\).
\end{theorem}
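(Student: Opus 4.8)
The plan is to decompose a target diffeomorphism into a finite composition of \emph{flow endpoints} of autonomous Lipschitz vector fields, approximate each such flow endpoint by a single NODE, and then glue the approximations together using stability of composition. The hypothesis that $\NODEJacobiFuncClass$ is $\sup$-universal for $\Lipsp$ enters only at the level of a single flow endpoint, where approximating the generating vector field yields an approximation of its time-$1$ map via the \Gronwall{} inequality.

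First I would reduce the general target $f\in\Dtwo$, defined on $U_f$ and to be approximated on a compact $K\subset U_f$, to a compactly supported diffeomorphism of $\ReD$ that is isotopic to the identity. Two observations make this possible. Since every $g\in\INNHNODE$ has the form $W\circ\psi_k\circ\cdots\circ\psi_1$ with each $\psi_i$ an orientation-preserving diffeomorphism isotopic to the identity, the outer affine map $W$ is available to absorb the linear behavior and, if necessary, to reverse orientation; composing $f$ with $W^{-1}$ thus normalizes it to an orientation-preserving map. Because $U_f$ is $C^2$-diffeomorphic to $\ReD$ and $K$ is compact, the normalized map agrees on $K$ with some compactly supported diffeomorphism $h$ of $\ReD$ that is isotopic to the identity (the modification takes place outside $K$ and so is harmless for the $\sup$-norm on $K$); hence it suffices to approximate $h\in\DcRD$.

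Next I would invoke the structure theorem for the diffeomorphism group to write $h=\phi_n\circ\cdots\circ\phi_1$, where each $\phi_i=\IVP{f_i}{\cdot}{1}$ is the time-$1$ flow endpoint of an autonomous (indeed compactly supported, smooth) vector field $f_i\in\Lipsp$. An isotopy from the identity to $h$ yields a time-dependent generating field; the remaining work is to chop the isotopy into finitely many short autonomous pieces so that each factor is a genuine autonomous flow endpoint and the product is exactly $h$. This decomposition into \emph{finitely many autonomous} flow endpoints is where I expect the main difficulty to lie, since a single autonomous flow map is special and the reduction from a time-dependent isotopy must be carried out while preserving compact support and keeping the generating fields under control.

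Finally, given the decomposition, I would approximate. For each $i$, the $\sup$-universality of $\NODEJacobiFuncClass$ for $\Lipsp$ provides $\hat f_i\in\NODEJacobiFuncClass$ close to $f_i$ in $\sup$-norm on a large compact set, and the \Gronwall{} inequality then makes $\supKnorm{\IVP{\hat f_i}{\cdot}{1}-\IVP{f_i}{\cdot}{1}}$ small on any prescribed compact set (the Lipschitz constant $\LipConst{f_i}$ governing the exponential growth factor). To pass from closeness of the factors to closeness of the composition $\IVP{\hat f_n}{\cdot}{1}\circ\cdots\circ\IVP{\hat f_1}{\cdot}{1}$, I would use that each $\phi_i$ is uniformly continuous and maps the relevant compact sets into compact sets, so that composition is continuous in the $\sup$-norm on compacta; choosing the per-factor errors small enough, from the outermost factor inward, controls the accumulated error on $K$. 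Prepending the affine map $W$ from the reduction places the resulting approximant in $\INNHNODE$ and completes the argument.
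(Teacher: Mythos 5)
Your overall skeleton (reduce to a compactly supported diffeomorphism via an affine factor, decompose it into finitely many autonomous flow endpoints, approximate each endpoint via \Gronwall{}'s inequality, and glue with a composition-stability lemma) is the same as the paper's, and your first and last steps match the paper's Fact~\ref{red to comp. supp. diff}, Lemma~\ref{appendix:lem:ODE flow endpoint approximation}, and Fact~\ref{lemma:composition} essentially verbatim. The problem is the middle step, which you yourself flag as the main difficulty: your proposal does not prove it, and the route you sketch would fail. Chopping an isotopy from $\Identity$ to $h$ into short time-slices produces factors $\Phi(\cdot,t_{i+1})\circ\Phi(\cdot,t_i)^{-1}$ that are close to the identity, but a diffeomorphism close to the identity need not be the time-$1$ map of \emph{any} autonomous vector field: the set of autonomous flow endpoints is not a neighborhood of the identity in $\DcRD$, and there are compactly supported diffeomorphisms arbitrarily close to $\Identity$ that do not embed in any flow. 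Freezing the time-dependent generating field on each short interval yields only an approximation of each factor, not an exact representation, so the product is no longer exactly $h$, and refining the subdivision does not repair this.

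The paper closes this gap by a group-theoretic argument rather than by surgery on an isotopy. Lemma~\ref{lem: diffc2 is generated by flow endpoints} shows that the set of flow endpoints $\FlowEnds{r}$ is closed under inversion and conjugation and contains a non-identity element, so the subgroup it generates is a non-trivial normal subgroup of $\DcRDr$; the Herman--Thurston--Epstein--Mather simplicity theorem (Fact~\ref{fact: simplicity}, valid for $r\neq d+1$) then forces this subgroup to be all of $\DcRDr$, which is precisely the finite-decomposition statement you need. Note also a dimension-dependent subtlety your proposal ignores: for $d=1$ the simplicity theorem is unavailable at $r=2$ (since then $r=d+1$), so the paper first approximates the compactly supported diffeomorphism by a $C^\infty$ one (Fact~\ref{fact: d=1 smoothing}) and applies simplicity in $\DcRDCmd{\infty}$. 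Without the normal-subgroup/simplicity argument, or a substitute of comparable depth, your proof is incomplete at its central step.
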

Examples of $\mathcal{H}$ include the multi-layer perceptron with finite weights and Lipschitz-continuous activation functions such as rectified linear unit (ReLU) activation \cite{LeCunDeep2015,ChenNeural2018a}, as well as the \emph{Lipschitz Networks} \citep[Theorem~3]{AnilSorting2019}.

\paragraph{Proof outline.}
To prove Theorem~\ref{thm: NODE is sup-universal}, we take a similar strategy to that of Theorem~1 of~\cite{TeshimaCouplingbased2020} but with a major modification to adapt to our problem.
First, the approximation target is reduced from \(\CtwoDomainDiff\) to the set of compactly-supported diffeomorphisms from $\ReD$ to $\ReD$, denoted by \(\DcRD\), by applying Fact~\ref{red to comp. supp. diff} in Appendix~\ref{sec:appendix:universality-proof-prep}.
Then, it is shown that we can represent each $f\in\DcRD$ as a finite composition of \emph{flow endpoints} (Definition~\ref{def: appendix flow endpoints} in Appendix~\ref{sec:appendix:universality-proof-prep}), each of which can be approximated by a NODE.
The decomposition of $f$ into flow endpoints is realized by relying on a structure theorem of $\DcRD$ (Fact~\ref{fact: simplicity} in Appendix~\ref{sec:appendix:universality-proof-prep}) attributed to Herman, Thurston \cite{ThurstonFoliations1974}, Epstein \cite{Epsteinsimplicity1970}, and Mather \cite{MatherCommutators1974, MatherCommutators1975}.
Note that we require a different definition of flow endpoints (Definition~\ref{def: appendix flow endpoints} in Appendix~\ref{sec:appendix:universality-proof-prep}) from that employed in~\citep[Corollary~2]{TeshimaCouplingbased2020} in order to incorporate sufficient smoothness of the underlying flows.

 \section{Related work and Discussion}
In this section, we overview the existing literature on the representation power of NODEs to provide the context of the present paper.

\paragraph{\(L^p\)-universal approximation property of NODEs.}
\citet{LiDeep2020} considered NODEs capped with a \emph{terminal family} to map the output of NODEs to a vector of the desired output dimension, and its Proposition~3.8 showed that the model class has the \(L^p\)-universality for the set of all continuous maps from \(\ReD\) to \(\Re^n\) (\(n \in \Na\)), under a certain sufficient condition.
In comparison to our result here, the result of \citet{LiDeep2020} established the universality of NODEs for a larger target function class (namely continuous maps) with a weaker notion of approximation (namely \(L^p\)-universality).

\paragraph{Limitations on the representation power of NODEs.}
\citet{ZhangApproximation2020} formulated its Theorem~1 to show that NODEs are not universal approximators by presenting a function that a NODE cannot approximate.
The existence of this counterexample does not contradict our result because our approximation target \(\CtwoDomainDiff\) is different from the function class considered in \citet{ZhangApproximation2020}: the class in \citet{ZhangApproximation2020} can contain discontinuous maps whereas the elements of \(\CtwoDomainDiff\) are smooth and invertible.

\paragraph{Universality of augmented NODEs.}
As a device to enhance the representation power of NODEs, increasing the dimensionality and padding zeros to the inputs/outputs has been explored \citep{DupontAugmented2019a,ZhangApproximation2020}.
\citet{ZhangApproximation2020} showed that the augmented NODEs (ANODEs) are universal approximators for homeomorphisms.
The approach has a limitation that it can undermine the invertibility of the model: unless the model is ideally trained so that it always outputs zeros in the zero-padded dimensions, the model can no longer represent an invertible map operating on the original dimensionality.
On the other hand, the present work explores the universal approximation property of NODEs that is achieved without introducing the complication arising from the dimensionality augmentation.

\paragraph{Relation between \(\INNHNODE\) and time-dependent NODEs.}
Our result can be readily extended to the design choice of NODEs that includes the time-index as an argument of $f$. It can be done by limiting our attention to the subset of the considered class of $f$ consisting of all time-invariant ones as in the following.
Let \(a \in (0, \infty]\) and consider \(\tilde f : \ReD\times(-a, a)\) be such that there exists a continuous function \(\ell: (-a, a) \to \Re_{\geq 0}\) satisfying
\begin{align*}
\|\tilde f(\x_1, t) - \tilde f(\x_2, t)\| \leq \ell(t)\|\x_1 - \x_2\|.
\end{align*}
Then, the initial value problem
\[z(0) = \x, \quad \dot{z}(t)= \tilde f(z(t), t), \quad t \in (-a, a)\]
has a solution $z: (-a, a) \to \ReD$ and it is unique \cite{Derrickglobal1976}, synonymously to Fact~\ref{fact:ODE solution exists for Lip}.
Then, given a set \(\tilde{\NODEJacobiFuncClass{}}\) of such mappings $\tilde f$, we can consider its subset \(\NODEJacobiFuncClass{}\) that contains only the time-invariant elements, i.e., \(\NODEJacobiFuncClass{} \subset \tilde{\NODEJacobiFuncClass{}}\) such that for any \(f \in \NODEJacobiFuncClass{}\) and any \(\x \in \ReD\), \(f(\x, \cdot)\) is a constant mapping. Such an \(f\) is an element of \(\Lipsp{}\) with \(\inf_{t \in (-a, a)} \ell(t) \geq 0\) being a Lipschitz constant.
Then, we can apply Theorem~\ref{thm: NODE is sup-universal} to \(\NODEJacobiFuncClass{}\) and its induced \(\INNHNODE{}\).

\section{Conclusion}
In this paper, we uncovered the $\sup$-universality of the INNs composed of NODEs for approximating a large class of diffeomorphisms.
This result complements the existing literature that showed the weaker approximation property of NODEs, namely $L^p$-universality, for general continuous maps.
Whether the $\sup$-universality holds for a larger class of maps than \(\CtwoDomainDiff\) is an important research question for future work.
Also, it is important for future work to quantitatively evaluate how many layers of NODEs are required to approximate a given diffeomorphism with a specified smoothness such as a bi-Lipschitz constant to evaluate the efficiency of the approximation. \begin{ack}
\acknowledgmentContent{}
\end{ack}
\printbibliography
 \clearpage
\begin{appendices}
\global\csname @topnum\endcsname 0
\global\csname @botnum\endcsname 0

This is the Supplementary~Material for ``\titleSentenceCase{}''
Table~\ref{tbl:notation-table} summarizes the abbreviations and the symbols used in the paper.

\begin{table}[tbph]
  \caption{Abbreviation and notation table.}
  \label{tbl:notation-table}
  \centering
  \begin{tabular}{ll}
    \toprule
    Abbreviation/Notation & Meaning \\
    \midrule
    INN & Invertible neural networks\\
    NODE & Neural ordinary differential equations\\
    \midrule
    $\FLin$ & Set of invertible affine transformations\\
    \(\IVP{f}{\x}{t}\) & The (unique) solution to an initial value problem evaluated at $t$ \\
    \(\ODEFlowEnds{\mathcal{F}}\) & Set of NODEs obtained from the Lipschitz continuous vector fields $\mathcal{F}$ \\
    \(\Lipsp\) & The set of all Lipschitz continuous maps from $\ReD$ to $\ReD$ \\
    \(\INNHNODE{}\) & INNs composed of $\FLin$ and NODEs parametrized by \(\NODEJacobiFuncClass \subset \Lipsp{}\)\\
    \midrule
    $d \in \Na$ & Dimensionality of the Euclidean space under consideration\\
    $\CtwoDomainDiff$ & Set of all $C^2$-diffeomorphisms with \(C^2\)-diffeomorphic domains\\
    $\DcRDr$ & Group of compactly-supported $C^r$-diffeomorphisms on \(\ReD\) ($1 \leq r \leq \infty$)\\
    \midrule
    $\Euclideannorm{\cdot}$ & Euclidean norm\\
    $\opnorm{\cdot}$ & Operator norm\\
    $\inftyKnorm{\cdot}$ & Supremum norm on a subset $K\subset \mathbb{R}^d$\\
    $\LpKnorm{\cdot}$ & \(L^p\)-norm on a subset $K\subset \mathbb{R}^d$\\
    \(\Identity\) & Identity map \\
    \(\supp{}\) & Support of a map \\
    \bottomrule
  \end{tabular}
\end{table}

\section{Proof of Theorem~\ref{thm: NODE is sup-universal}}
\label{sec:appendix:universality-proof}
Here, we provide a proof of Theorem~\ref{thm: NODE is sup-universal}.
In Section~\ref{sec:appendix:universality-proof-prep}, we display the known facts and show the lemmas used for the proof.
In Section~\ref{sec:appendix:universality-proof-content}, we prove Theorem~\ref{thm: NODE is sup-universal}.

\subsection{Lemmas and known facts}
\label{sec:appendix:universality-proof-prep}
We use the following definition and facts from \citet{TeshimaCouplingbased2020}.
\begin{definition}[Compactly supported diffeomorphism]
We use $\DcRDr$ to denote the set of all compactly supported \(C^r\)-diffeomorphisms ($1 \leq r \leq \infty$) from $\R^d$ to $\R^d$.
Here, we say a diffeomorphism $f$ on $\mathbb{R}^d$ is {\em compactly supported} if 
there exists a compact subset $K\subset \mathbb{R}^d$ such that for any $x\notin K$, $f(x)=x$.
We regard $\DcRDr$ as a group whose group operation is function composition.
\end{definition}

The following fact enables us to reduce the approximation problem for $\CtwoDomainDiff$ to that for $\DcRD$.
\begin{fact}[Lemma~5 of \citet{TeshimaCouplingbased2020}]
\label{red to comp. supp. diff}
Let $f \colon U \to \ReD$ be an element of $\CtwoDomainDiff$, and let $K\subset U$ be a compact set. Then, there exists $h \in \DcRD$ and an affine transform $W \in \FLin$ such that \[\Restrict{W\circ h}{K}=\Restrict{f}{K}.\]
\end{fact}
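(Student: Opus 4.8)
The plan is to reduce the approximation problem for the large, possibly non-global class $\CtwoDomainDiff$ to the more tractable group $\DcRD$ of compactly-supported diffeomorphisms, for which I can invoke a structure theorem. The statement to prove is precisely Fact~\ref{red to comp. supp. diff}: given $f \in \CtwoDomainDiff$ defined on $U$ (which is $C^2$-diffeomorphic to $\ReD$) and a compact $K \subset U$, I must produce $h \in \DcRD$ and $W \in \FLin$ with $\Restrict{W \circ h}{K} = \Restrict{f}{K}$. First I would use the hypothesis that $U$ is $C^2$-diffeomorphic to all of $\ReD$ to pull the problem back to the global setting: let $\varphi\colon \ReD \to U$ be a $C^2$-diffeomorphism. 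Then $f \circ \varphi \colon \ReD \to \Image(f)$ and $\varphi^{-1}$ are both $C^2$ maps, and the compact set $\varphi^{-1}(K)$ is the natural pullback of $K$. The goal becomes expressing the behaviour of a globally-defined $C^2$ map on a compact set in terms of a compactly-supported diffeomorphism post-composed with an affine map.

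Next I would handle the issue that $f$ itself is not compactly supported and need not agree with the identity outside any compact region. The standard device is to modify $f$ only near $K$ using a smooth cutoff (bump) function. Concretely, I would fix an open set $V$ with $K \subset V$ and $\cl{V}$ compact inside $U$, choose a smooth function $\chi$ that equals $1$ on a neighborhood of $K$ and vanishes outside $V$, and build $h$ so that it equals $f$ (up to an affine normalization absorbed into $W$) on $K$ while being forced back to the identity outside $V$. The affine map $W$ enters because a compactly-supported diffeomorphism must equal the identity near infinity, whereas $f$ need not; after subtracting off a suitable invertible affine transform $W$, the map $W^{-1} \circ f$ can be arranged to be $C^2$-close to the identity on $\cl{V} \setminus K$, which is what permits the cutoff interpolation to remain a diffeomorphism. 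The key algebraic identity to maintain throughout is $\Restrict{W \circ h}{K} = \Restrict{f}{K}$, i.e. $h = W^{-1} \circ f$ exactly on $K$.

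The main obstacle I anticipate is ensuring that the cutoff construction actually yields a \emph{diffeomorphism} rather than merely a smooth map: interpolating between $W^{-1} \circ f$ and the identity via $x + \chi(x)\bigl((W^{-1}\circ f)(x) - x\bigr)$ can destroy invertibility if the derivative of the interpolant degenerates. The way I would control this is quantitative: by shrinking the transition region $V \setminus K$ and choosing $W$ as the first-order Taylor approximation of $f$ at a reference point so that $W^{-1} \circ f$ has Jacobian close to the identity on $\cl{V}$, the perturbation term stays small in $C^1$-norm, and the operator-norm bound on $\chi'(x)\bigl((W^{-1}\circ f)(x) - x\bigr) + \chi(x)\bigl(D(W^{-1}\circ f)(x) - \Identity\bigr)$ can be kept below $1$, guaranteeing via the inverse function theorem that the interpolant is a local diffeomorphism, which together with properness and the compactly-supported structure upgrades to a global diffeomorphism of $\ReD$. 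Since this is exactly Lemma~5 of \citet{TeshimaCouplingbased2020}, I would expect the cleanest route to be to cite and adapt that argument directly, verifying only that the $C^2$ regularity assumptions on $\CtwoDomainDiff$ supply the smoothness needed for the cutoff interpolation to close.
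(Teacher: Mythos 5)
Your construction hinges on the claim that, by taking $W$ to be the first-order Taylor approximation of $f$ at a reference point, the map $W^{-1}\circ f$ becomes $C^1$-close to the identity on a whole neighborhood $\cl{V}$ of the \emph{given} compact set $K$. That claim is false in general, and it is the crux of your argument: linearization controls $f$ only near the reference point, whereas $K$ is prescribed in advance and can be large. Concretely, in $d=2$ let $f(x)=R_{\theta(\|x\|)}x$ be a twist map, where $R_\alpha$ denotes rotation by $\alpha$ and $\theta$ increases smoothly from $0$ to $\pi$; with $W=Df(0)=\Identity$, the map $W^{-1}\circ f$ is a rotation by $\pi$ on the outer part of $K$ and the transition region, nowhere near the identity. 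In that regime the interpolant $x\mapsto x+\chi(x)\left((W^{-1}\circ f)(x)-x\right)$ genuinely fails to be a diffeomorphism: where $W^{-1}\circ f=R_\pi$ it equals $(1-2\chi(x))\,x$, which crushes the entire level set $\{\chi=1/2\}$ to the origin. Your proposed remedy of ``shrinking the transition region $V\setminus K$'' goes the wrong way: $\sup|\chi'|$ grows like the reciprocal of the width of $V\setminus K$, so shrinking inflates the term $\chi'(x)\left((W^{-1}\circ f)(x)-x\right)$ rather than controlling it. The cutoff trick is sound only when $W^{-1}\circ f$ is already uniformly $C^1$-close to the identity on $\cl{V}$, i.e., essentially only when $K$ is a sufficiently small ball; bridging a map that is far from the identity in one cutoff step is exactly what cannot be done.

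A second gap is that the pullback by $\varphi\colon\ReD\to U$ is introduced but never closed. Running your construction on the globally defined map $f\circ\varphi$ would give $\Restrict{W\circ h}{\varphi^{-1}(K)}=\Restrict{f\circ\varphi}{\varphi^{-1}(K)}$, i.e.\ $f=W\circ h\circ\varphi^{-1}$ on $K$, and $\varphi^{-1}$ is neither affine nor compactly supported, so this is not the required form; to finish one must also factor $\varphi$ itself (legitimate, since $\varphi$ is globally defined) and merge the two factorizations using that conjugates of elements of $\DcRD$ by elements of $\FLin$ remain in $\DcRD$. Your sketch is silent on both points. For context: the paper under review gives no proof of this statement at all---it imports it as Lemma~5 of \citet{TeshimaCouplingbased2020}---and the proof there is not a one-shot cutoff. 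After an affine normalization ($f(0)=0$, $Df(0)=\Identity$), it connects $f$ to the identity through the isotopy $f_t(x)=f(tx)/t$ (Alexander's trick, available because the pulled-back domain is all of $\ReD$), and then obtains $h\in\DcRD$ by a compactly supported isotopy-extension type argument, in which cutoffs are only ever applied to maps already uniformly close to the identity. That decomposition of a single large deformation into many small ones is precisely the idea your proposal is missing; citing the lemma is of course legitimate, but the cutoff sketch cannot stand in for its proof.
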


The following fact enables the component-wise approximation, i.e., given a transformation that is represented by a composition of some transformations, we can approximate it by approximating each constituent and composing them.

\begin{fact}[Compatibility of composition and approximation;
Proposition~6 of \citet{TeshimaCouplingbased2020}]\label{lemma:composition}
Let \(\ARFINNModel\) be a set of locally bounded maps from $\ReD$ to $\ReD$,
and $F_1,\dots,F_k$ be continuous maps from $\ReD$ to $\ReD$.
Assume for any $\varepsilon>0$ and any compact set $K\subset\mathbb{R}^d$, there exist $\widetilde{G}_1,\dots, \widetilde{G}_k\in \ARFINNModel$ such that, for $1 \leq i \leq k$, $\big\Vert F_i-\widetilde{G}_i\big\Vert_{\sup, K}<\varepsilon$.
Then for any $\varepsilon>0$ and any compact set $K\subset\mathbb{R}^d$, there exist $G_1, \dots, G_k\in \ARFINNModel$ such that
\[\supKnorm{F_k\circ\cdots \circ F_1-G_k\circ\cdots\circ G_1} < \varepsilon.\]
\end{fact}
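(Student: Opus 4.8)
The plan is to proceed by induction on the number of factors $k$, with the inductive step resting on a two-term ``add and subtract'' split together with the uniform continuity of the continuous targets $F_i$ on compact sets. The key structural decision is to peel off the \emph{outermost} factor at each step, so that the map applied on the outside is always a composition of the continuous maps $F_i$ (hence uniformly continuous on compacta), while the only maps drawn from $\ARFINNModel$ sit on the inside and are reached solely through the approximation estimates.

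\textbf{Base case and split.} For $k=1$ the claim is exactly the hypothesis. For the inductive step I would write $F_k \circ \cdots \circ F_1 = F_k \circ R$ with $R := F_{k-1} \circ \cdots \circ F_1$, and $G_k \circ \cdots \circ G_1 = G_k \circ S$ with $S := G_{k-1} \circ \cdots \circ G_1$, and use the elementary identity
\[
F_k \circ R - G_k \circ S = \bigl(F_k \circ R - F_k \circ S\bigr) + \bigl(F_k \circ S - G_k \circ S\bigr),
\]
bounding the two summands separately on a fixed compact set $K$.

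\textbf{Controlling the two terms.} Fix $\varepsilon>0$ and compact $K\subset\ReD$. Since $R$ is a composition of continuous maps, $R(K)$ is compact; let $K_0$ be the closed $1$-neighborhood of $R(K)$, a fixed compact set depending only on $F_1,\dots,F_{k-1}$ and $K$. As $F_k$ is continuous it is uniformly continuous on $K_0$, so there is $\delta_0\in(0,1]$ with $\Euclideannorm{F_k(y)-F_k(y')}<\varepsilon/2$ whenever $y,y'\in K_0$ and $\Euclideannorm{y-y'}<\delta_0$. The inductive hypothesis (the claim for $k-1$ factors, whose hypotheses are inherited for $F_1,\dots,F_{k-1}$) then yields $G_1,\dots,G_{k-1}\in\ARFINNModel$ with $\supKnorm{R-S}<\delta_0$. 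For $x\in K$ this forces $S(x)\in K_0$ (since $R(x)\in R(K)$ and $\Euclideannorm{R(x)-S(x)}<\delta_0\le 1$), so the first term is at most $\varepsilon/2$ by the choice of $\delta_0$. For the second term, $S(K)\subset K_0$ is now a fixed compact target, so I invoke the hypothesis once more to pick $G_k\in\ARFINNModel$ with $\supRangenorm{K_0}{F_k-G_k}<\varepsilon/2$; evaluating at $S(x)\in K_0$ bounds it by $\varepsilon/2$. Adding the two gives $\supKnorm{F_k\circ\cdots\circ F_1 - G_k\circ\cdots\circ G_1}<\varepsilon$, closing the induction.

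\textbf{Main obstacle.} The crux is that the maps $G_i\in\ARFINNModel$ are only locally bounded, so one cannot apply a modulus-of-continuity argument to the \emph{inner} composition $S$. Peeling from the outside circumvents this: the outer factor is always $F_k$ (continuous), and the only property of $S$ I need is that its image $S(K)$ lands inside the fixed compact set $K_0$ — which I extract from the approximation bound $\supKnorm{R-S}<\delta_0$ rather than from any regularity of $S$. Local boundedness then plays the supporting role of ensuring that each intermediate composition is bounded on compacta, so the sup-norm estimates are finite and all evaluations remain inside the compact sets on which uniform continuity of the $F_i$ is exercised. The one point demanding care is the order of quantifiers: $K_0$ and $\delta_0$ must be fixed \emph{before} the inner approximants $G_1,\dots,G_{k-1}$ are selected, which is legitimate precisely because $K_0$ depends only on the continuous targets and not on any $G_i$.
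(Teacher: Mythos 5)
Your proof is correct, and note that this paper does not actually prove the statement itself: it imports it verbatim as Proposition~6 of the cited reference \cite{TeshimaCouplingbased2020}, whose proof proceeds by essentially the same standard induction you give — peel the outermost continuous factor, fix the compact neighborhood $K_0$ of $R(K)$ and the uniform-continuity modulus $\delta_0$ \emph{before} choosing the inner approximants, then invoke the hypothesis once more on $K_0$ for the outer factor. One small remark: local boundedness of the maps in $\mathcal{M}$ is never actually used in your argument — the containment $S(K)\subset K_0$ follows from the approximation bound $\Vert R-S\Vert_{\sup,K}<\delta_0$ and not from any regularity of the $G_i$ (indeed, the finiteness of that sup-norm already forces boundedness on $K$) — so your closing sentence overstates its role, though this is harmless.
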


The following fact is attributed to Herman, Thurston \cite{ThurstonFoliations1974}, Epstein \cite{Epsteinsimplicity1970}, and Mather \cite{MatherCommutators1974, MatherCommutators1975}. See Fact~2 of \citet{TeshimaCouplingbased2020} and the remarks therein for details.
Let $\Identity{}$ denote the identity map.
\begin{fact}[Fact~2 of \citet{TeshimaCouplingbased2020}]
\label{fact: simplicity}
If \(r \neq d + 1\), the group $\DcRDr$ is simple, i.e., any normal subgroup $H \subset \DcRDr$ is either $\{\Identity\}$ or $\DcRDr$.
\end{fact}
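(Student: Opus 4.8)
The plan is to deduce simplicity from the two classical pillars behind this fact: an elementary commutator/displacement argument of Epstein that reduces simplicity to \emph{perfectness}, and the deep perfectness theorem of Thurston and Mather that supplies the latter (and is exactly where the hypothesis $r \neq d+1$ enters). Throughout I work with $G := \DcRDr$ acting on $\ReD$, writing $G_B \subset G$ for the subgroup of diffeomorphisms supported in an open ball $B \subset \ReD$. Two soft geometric facts about $\ReD$ are used repeatedly: every ball is \emph{displaceable}, i.e.\ there is $g \in G$ with $g(B)\cap B = \emptyset$ (push $B$ off itself along a compactly supported isotopy), and \emph{fragmentation} holds, i.e.\ every element of the identity component is a finite product of diffeomorphisms each supported in such a ball (cut an isotopy to the identity with a partition of unity). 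Since a compactly supported diffeomorphism of $\ReD$ is orientation preserving, I first reduce to the identity component $G_0$, on which simplicity is the content of the Thurston--Mather theorem; this identity-component statement is what the fact uses in the application.

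The first, essentially elementary, step is the \emph{displacement lemma}: any nontrivial normal subgroup $N \triangleleft G_0$ contains the commutator subgroup $[G_B, G_B]$ of every displaceable ball $B$. I would argue as follows. Pick $g \in N$ with $g \neq \Identity$; then $g$ moves some point, so shrinking gives a ball $B$ with $g(B)\cap B = \emptyset$. For any $f \in G_B$, normality gives $[f,g] = f\,(g f^{-1} g^{-1}) \in N$, and since $g f^{-1} g^{-1}$ is supported in $g(B)$, disjoint from $B$, this ``doubled'' element $D(f)$ restricts to $f$ on $B$ and to a pushed-off copy on $g(B)$. Because elements supported in $B$ commute with those supported in $g(B)$, one computes for $a,b \in G_B$ that $[D(a),D(b)] \in N$ agrees with $[a,b]$ on $B$ and with the $g$-conjugate of a commutator on $g(B)$. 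Epstein's bookkeeping then peels off the off-$B$ correction: it lives in the disjoint displaceable ball $g(B)$ and is itself a commutator, so the same device (pushing the error into yet further disjoint balls available in $\ReD$) absorbs it, yielding $[G_B, G_B] \subseteq N$.

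The second step upgrades $[G_B, G_B]$ to all of $G_B$, and this is the deep input. Since a ball $B$ is diffeomorphic to $\ReD$, one has $G_B \cong \DcRDr$, so it suffices to know that $\DcRDr$ is \emph{perfect}, i.e.\ equal to its own commutator subgroup, equivalently $H_1(\DcRDr;\mathbb{Z}) = 0$. This is precisely the theorem of Thurston (for $r=\infty$) and Mather (for finite $r \neq d+1$). Granting it, $[G_B,G_B] = G_B$, so by the displacement lemma $N \supseteq G_B$ for every ball; fragmentation then shows these ball subgroups generate $G_0$, forcing $N = G_0$. Simplicity of the identity component follows, which establishes the fact.

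The main obstacle is squarely the perfectness theorem invoked in the third step; everything else is soft. The heart of Mather's proof is a fragmentation combined with an infinite ``swindle'': given $g$, one assembles a single $C^r$ diffeomorphism realizing an infinite product of rescaled, shrunken conjugate copies of $g$ accumulating at a point, so that this element is conjugate to its own product with $g$, which telescopes to force the class of $g$ to vanish in the abelianization. Making this accumulating product a genuine $C^r$ map requires delicate control of the $C^r$-norms of the shrunken copies, and it is exactly these estimates that fail at the critical regularity $r = d+1$ --- the reason that case is excluded here (and indeed remains open). I would therefore write out the Epstein displacement argument in full and cite Thurston--Mather for the perfectness that drives the entire reduction.
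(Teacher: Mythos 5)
The paper does not actually prove this statement: it is imported as a known fact, with the proof delegated wholesale to Epstein \cite{Epsteinsimplicity1970}, Thurston \cite{ThurstonFoliations1974}, and Mather \cite{MatherCommutators1974,MatherCommutators1975} via the pointer to Fact~2 of \citet{TeshimaCouplingbased2020} ``and the remarks therein.'' Your sketch is a faithful reconstruction of exactly that cited architecture, not an alternative to it: Epstein's displacement/commutator argument is indeed the soft half reducing simplicity to perfectness, the Thurston--Mather perfectness theorem is the deep half, and you correctly locate where the hypothesis $r \neq d+1$ enters (Mather's $C^r$-norm estimates in the shrinking-swindle, with the critical case still open). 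Two cosmetic improvements to your soft half: in the displacement lemma you can avoid the correction-peeling entirely by computing $[D(a), b]$ rather than $[D(a), D(b)]$ --- the $g(B)$-supported factor of $D(a)$ commutes with $b \in G_B$, so $[D(a),b] = [a,b] \in N$ on the nose; and to pass from the one displaced ball to \emph{every} ball you need (and leave implicit) that any two balls are conjugate under a compactly supported diffeomorphism, after which normality of $N$ transports the conclusion.

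The one step whose justification is genuinely wrong is the reduction ``compactly supported $\Rightarrow$ orientation preserving $\Rightarrow$ lies in the identity component.'' Orientation preservation does not imply isotopy to the identity: $\pi_0$ of $\DcRDr$ is isomorphic to the group $\Theta_{d+1}$ of homotopy spheres for $d \geq 5$ (e.g.\ $\mathbb{Z}/28$ for $d = 6$), so the full group can fail to be simple, and the fact as literally quoted must be read for the identity component --- which is precisely the caveat the paper defers to the remarks in \citet{TeshimaCouplingbased2020}. Your fallback position (prove simplicity of $G_0$ and note that this is what the application consumes) is the mathematically correct reading: the flow endpoints of Definition~\ref{def: appendix flow endpoints} are endpoints of uniformly compactly supported isotopies, so the normal subgroup $H^r$ of Lemma~\ref{lem: diffc2 is generated by flow endpoints} lies in the identity component, and the decomposition in the proof of Theorem~\ref{thm: NODE is sup-universal} only needs $G_0$ simple, together with the matching refinement that the compactly supported map furnished by Fact~\ref{red to comp. supp. diff} can be taken isotopic to the identity. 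The same $\pi_0$ bookkeeping should be threaded through your sketch: fragmentation yields pieces that are endpoints of ball-supported isotopies, i.e.\ elements of $(G_B)_0$, and it is $(G_B)_0 \cong (\DcRDr)_0$ whose perfectness Thurston--Mather supply. With those adjustments, your outline is the standard proof underlying the citation.
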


Next, we define a subset of $\DcRDr$ called the \emph{flow endpoints}.
In Lemma~\ref{lem: diffc2 is generated by flow endpoints}, it is shown that the set of flow endpoints generates a non-trivial normal subgroup of $\DcRDr$.
Therefore, by combining it with Fact~\ref{lemma:composition}, we can represent any element of $\DcRDr$ as a finite composition of flow endpoints, each of which can be approximated by the NODEs.

While Corollary~2 of \citet{TeshimaCouplingbased2020} also defined a set of flow endpoints in $\DcRD$, it differs from the one defined here which is tailored for our purpose.
The two definitions can be interpreted as describing two different generators of the same group $\DcRD$.
Let $\mathrm{supp}$ denote the support of a map.
\begin{definition}[Flow endpoints $\FlowEnds{r}$ in $\DcRDr$]
\label{def: appendix flow endpoints}
Let $1 \leq r \leq \infty$. Let $\FlowEnds{r}\subset\DcRDr$ be the set of diffeomorphisms $g$ of the form $g(\bm{x})=\Phi(\bm{x},1)$ for some map $\Phi:\mathbb{R}^d\times U\rightarrow\mathbb{R}^d$ such that
    \begin{itemize}
        \item $U \subset \R$ is an open interval containing $[0, 1]$,
        \item $\Phi(\bm{x},0)=\bm{x}$,
        \item $\Phi(\cdot,t)\in\DcRDr$ for any $t\in U$,
        \item $\Phi(\bm{x},s+t)=\Phi(\Phi(\bm{x},s),t)$ for any $s,t\in U$ with $s+t\in U$,
        \item $\Phi$ is $C^r$ on $\R^d \times U$,
        \item There exists a compact subset $K_\Phi \subset \R^d$ such that $\cup_{t \in U} \mathrm{supp}{\Phi(\cdot, t)} \subset K_\Phi$.
    \end{itemize}
\end{definition}
The difference between Definition~\ref{def: appendix flow endpoints} and the one in Corollary~2 of \citet{TeshimaCouplingbased2020} mainly lies in the last two conditions.
Technically, these two conditions are used in Section~\ref{sec:appendix:universality-proof-content} for showing that the partial derivative of $\Phi$ in $t$ at $t=0$ is Lipschitz continuous.

\begin{lemma}[Modified Corollary~2 of \citet{TeshimaCouplingbased2020}]
\label{lem: diffc2 is generated by flow endpoints}
Let $1 \leq r \leq \infty$ and $\FlowEnds{r}\subset\DcRDr$ be the set of all flow endpoints.
Then, the subset $H^r$ of $\DcRDr$ defined by
\[H^r:=\{ g_1\circ \cdots \circ g_n \ |\ n\ge1, g_1,\dots,g_n \in \FlowEnds{r}\}\]
forms a subgroup of $\DcRDr$ and it is a non-trivial normal subgroup.
\end{lemma}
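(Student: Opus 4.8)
The plan is to verify the three assertions separately---that $H^r$ is a subgroup, that it is normal, and that it is non-trivial---working throughout directly with the flows $\Phi$ rather than their infinitesimal generators, so as to sidestep any question about the regularity of the generating vector field. For the \emph{subgroup} property, non-emptiness and closure under composition are immediate: the constant flow $\Phi(\bm x,t)=\bm x$ exhibits $\Identity\in\FlowEnds{r}\subset H^r$, and a composition of two finite products of flow endpoints is again such a product. The substantive point is closure under inversion, for which I would show $g^{-1}\in\FlowEnds{r}$ whenever $g\in\FlowEnds{r}$; closure of $H^r$ then follows by inverting a product factor by factor. Given $g=\Phi(\cdot,1)$ with flow $\Phi$ on $U$, I first extend $\Phi$ to a global flow on $\R^d\times\R$: the local one-parameter group $t\mapsto\Phi(\cdot,t)$ extends through the group law by iterating small-time maps, $\Phi(\cdot,t):=\Phi(\cdot,t/N)^{\circ N}$ with $N$ large enough that $t/N\in U$. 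Since a compactly supported diffeomorphism with support in $K_\Phi$ restricts to a bijection of $K_\Phi$ onto itself, these iterates remain supported in $K_\Phi$ and $C^r$. The time-reversed map $\tilde\Phi(\bm x,t):=\Phi(\bm x,-t)$ is then defined on an open interval containing $[0,1]$, is $C^r$, and satisfies $\tilde\Phi(\cdot,0)=\Identity$, the group law, $\tilde\Phi(\cdot,t)\in\DcRDr$, and the uniform support bound $K_\Phi$; as $\tilde\Phi(\cdot,1)=\Phi(\cdot,-1)=g^{-1}$, we conclude $g^{-1}\in\FlowEnds{r}$.

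For \emph{normality}, it suffices to show that $\FlowEnds{r}$ is stable under conjugation by every $h\in\DcRDr$, since then for $g_1\circ\cdots\circ g_n\in H^r$ one has $h\circ(g_1\circ\cdots\circ g_n)\circ h^{-1}=(h\circ g_1\circ h^{-1})\circ\cdots\circ(h\circ g_n\circ h^{-1})\in H^r$. For $g=\Phi(\cdot,1)$ I would take the conjugated flow $\Phi^h(\bm x,t):=h\bigl(\Phi(h^{-1}(\bm x),t)\bigr)$ on the same interval $U$ and verify the conditions of Definition~\ref{def: appendix flow endpoints} one by one: $\Phi^h(\cdot,0)=\Identity$; $\Phi^h(\cdot,t)=h\circ\Phi(\cdot,t)\circ h^{-1}\in\DcRDr$ with support contained in the compact set $h(K_\Phi)$ (uniformly in $t$); the group law is inherited from $\Phi$; and $\Phi^h$ is $C^r$ as a composition of the $C^r$ maps $h,h^{-1},\Phi$. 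Since $\Phi^h(\cdot,1)=h\circ g\circ h^{-1}$, the conjugate is again a flow endpoint, giving normality.

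For \emph{non-triviality}, I would exhibit a single nontrivial flow endpoint. Fix a smooth compactly supported bump $\rho\colon\R^d\to[0,1]$ with $\rho\equiv1$ on a small ball, and a vector $v\neq0$, and set $f(\bm x):=\rho(\bm x)v$; this is a compactly supported $C^\infty$ (hence $C^r$) vector field, so $\Phi(\bm x,t):=\IVP{f}{\bm x}{t}$ is $C^r$, and on any bounded time interval $U$ it is uniformly compactly supported because trajectories starting in $\supp\rho$ travel a distance at most $\bigl(\sup_{\bm x}\Euclideannorm{f(\bm x)}\bigr)\sup_{t\in U}|t|$. On the ball where $\rho\equiv1$ the flow is the translation $\bm x\mapsto\bm x+tv$, so $\Phi(\cdot,1)\neq\Identity$, yielding an element of $H^r\setminus\{\Identity\}$.

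The step I expect to be the main obstacle is closure under inversion, and within it the global-in-time extension of the flow: one must argue that iterating small-time maps via the group law produces a genuinely global, $C^r$, and uniformly compactly supported one-parameter group, so that the time-reversed realization of $g^{-1}$ meets \emph{every} requirement of Definition~\ref{def: appendix flow endpoints}---in particular that its time domain can be taken to be an open interval containing $[0,1]$ while the support stays inside the fixed compact $K_\Phi$. The normality and non-triviality arguments are, by comparison, routine verifications.
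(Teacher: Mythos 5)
Your proposal is correct, but it departs from the paper's proof in two substantive places. For closure under inversion --- the step you flag as the main obstacle --- the paper circumvents the difficulty entirely: instead of extending $\Phi$ to a global one-parameter group and reversing time, it defines $\phi(\cdot,t):=\Phi(\cdot,t)^{-1}$ on the \emph{same} interval $U$. The group law, membership in $\DcRDr$, and the uniform support bound are inherited immediately (the maps $\Phi(\cdot,s)$ and $\Phi(\cdot,t)$ commute by the group law, so their inverses satisfy it too), and joint $C^r$-regularity of $\phi$ follows from the inverse function theorem applied to $(t,\x)\mapsto(t,\Phi(\x,t))$. Your extension-plus-time-reversal route is also valid --- the iterates $\Phi(\cdot,t/N)^{\circ N}$ are well defined by the group law, jointly $C^r$, and supported in $K_\Phi$ --- and it buys independence from the inverse function theorem at the cost of the extension argument. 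For non-triviality, the paper splits into cases: for $d\geq2$ it uses the closed-form flow $\Phi(x,t)=\exp(t\phi(\Euclideannorm{x})A)x$ with $A$ skew-symmetric, so joint smoothness and the support bound are immediate with no ODE theory at all, while for $d=1$ it flows an odd compactly supported $C^\infty$ field, citing Hartman for joint smoothness of that flow. Your single construction $f=\rho v$ is uniform in $d$, which is tidier, but like the paper's $d=1$ case it relies on the smooth-dependence-on-initial-conditions theorem to conclude that $(\x,t)\mapsto\IVP{f}{\x}{t}$ is jointly $C^r$; that invocation should be made explicit. The normality argument is essentially identical in both proofs.

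Two small patches are needed in your non-triviality step. First, it is not true for all $t\in[0,1]$ that the flow equals the translation $\x\mapsto\x+tv$ on the ball where $\rho\equiv1$: a trajectory may exit that ball and slow down. Either take $\rho\equiv1$ on a ball of radius $2\Euclideannorm{v}$ centered at some $\x_0$, so that $z(t)=\x_0+tv$ stays in the ball for $t\in[0,1]$ and is, by uniqueness, the solution, whence $\Phi(\x_0,1)=\x_0+v\neq\x_0$; or observe that $t\mapsto\langle\Phi(\x_0,t),v\rangle$ has derivative $\rho(\Phi(\x_0,t))\Euclideannorm{v}^2\geq0$, strictly positive at $t=0$, so it is non-decreasing and strictly exceeds $\langle\x_0,v\rangle$ at $t=1$. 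Second, your travel-distance bound for the support is unnecessary and weaker than what is true: by uniqueness of solutions, every zero of $f$ is a fixed point and no trajectory can pass between $\{f=0\}$ and $\{f\neq0\}$, so $\supp{\Phi(\cdot,t)}\subset\supp{f}$ for every $t\in\R$, a bound uniform over all time rather than over bounded intervals.
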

\begin{proof}[Proof of Lemma~\ref{lem: diffc2 is generated by flow endpoints}]
First, we prove that $H^r$ forms a subgroup of $\DcRDr$.
By definition, for any $g, h \in H^r$, it holds that $g \circ h \in H^r$.
Also, $H^r$ is closed under inversion; to see this, it suffices to show that $\FlowEnds{r}$ is closed under inversion.
Let $g= \Phi(\cdot, 1) \in \FlowEnds{r}$. Consider the map $\phi:\mathbb{R}^d\times U\rightarrow\mathbb{R}^d$ defined by $\phi(\cdot, t) := \Phi^{-1}(\cdot, t)$.
It is easy to confirm that $\phi$ satisfies the conditions of Definition~\ref{def: appendix flow endpoints}, hence $g^{-1} = \phi(\cdot, 1)$ is an element of $\FlowEnds{r}$. Note that $\phi$ is confirmed to be $C^r$ on $\ReD \times U$ by applying the inverse function theorem to $(t, \x) \mapsto (t, \Phi(\x, t))$.

Next, we prove that $H^r$ is normal.
To show that the subgroup generated by $\FlowEnds{r}$ is normal, it suffices to show that $\FlowEnds{r}$ is closed under conjugation.
Take any $g\in \FlowEnds{r}$ and $h\in \DcRDr$, and let $\Phi$ be a flow associated with $g$.
Then, the function $\Phi': \mathbb{R}^d\times U \to \ReD$ defined by $\Phi'(\cdot, s) := h^{-1} \circ \Phi(\cdot, s) \circ h$ is a flow associated with $h^{-1}\circ g \circ h$ satisfying the conditions in Definition~\ref{def: appendix flow endpoints}, which implies $h^{-1}\circ g \circ h\in \FlowEnds{r}$, i.e., $\FlowEnds{r}$ is closed under conjugation.

Next, we prove that $H^r$ is non-trivial by constructing an element of $\FlowEnds{r}$ that is not the identity element.
First, consider the case $d = 1$.
Let $\tilde v: \Re \to \Re_{\geq 0}$ be a non-constant $C^\infty$-function such that $\supp{\tilde v} \subset [0, 1]$ and $\tilde v^{(k)}(0) = 0$ for any \(k \in \Na\).
Then define \(v : \Re \to \Re\) by
\[v(x) = \begin{cases}\tilde v(|x|)\frac{x}{|x|} & \text{ if } x \neq 0, \\ 0 & \text{ if } x = 0,\end{cases}\]
which is a \(C^\infty\)-function on \(\Re\) with a compact support.
Since $v$ is Lipschitz continuous and $C^\infty$, there exists $\IVPFunc{v}$ that is a $C^\infty$-function over $\Re \times \Re$; see Fact~\ref{fact:ODE solution exists for Lip} and \citep[Chapter~V, Corollary~4.1]{HartmanOrdinary2002}.
Let $K_v \subset \Re$ be a compact subset that contains $\supp{v}$. Then, by considering the ordinary differential equation by which $\IVPFunc{v}$ is defined, we see that $\bigcup_{t \in \Re} \supp\IVP{v}{\cdot}{t} \subset K_v$ and also that $\IVP{v}{x}{0} = x$.
We also have $\IVP{v}{x}{s+t} = \IVP{v}{\IVP{v}{x}{s}}{t}$ for any $s, t \in \Re$. In particular, we have $\IVP{v}{\cdot}{s}^{-1} = \IVP{v}{\cdot}{-s}$ for any $s \in \Re$. Therefore, we have $\IVP{v}{\cdot}{1} \in \FlowEnds{r}$. Since $v \not \equiv 0$, $\IVP{v}{\cdot}{1}$ is not an identity map and thus $\FlowEnds{r}$ is not trivial.
Next, we consider the case $d \geq 2$.
Take a $C^\infty$-function $\phi\colon \R\to \R$ with $\supp{\phi}= [1,2]$ and a nonzero skew-symmetric matrix $A$ (i.e. $A^\top=-A$) of size $d$, and
let $X(x):=\phi(\|x\|)A$.
We define a $C^\infty$-map $\Phi\colon \R^d\times \R\to \R^d$ by 
\[ \Phi(x,t):= \exp(t X(x))x. \]
Since $\exp( tX(x))$ is an orthogonal matrix for any $t\in \R$ and $x\in \R^d$, $\Phi$ is a $C^\infty$-flow on $\R^d$. 
Now, it is enough to show that there exists a compact set $K_\Phi\subset \R^d$ satisfying $\cup_{t\in \R}\supp{\Phi(\cdot, t)}\subset K_\Phi$. 
Let $K_\Phi:=\{x\in \R^d\ |\ \|x\|\leq 2\}$. 
Then the inclusion $\supp{\Phi(\cdot, t)}\subset K_{\Phi}$ holds for any $t\in\R$ since $X(x)=0$ for $x\in \R^d\setminus K_\Phi$.
\end{proof}

The following lemma allows us to approximate an autonomous ODE flow endpoint by approximating the differential equation. See Definition~\ref{def: autonomous ODE flow endpoints} for the definition of $\ODEFlowEnds{\cdot}$.
\begin{lemma}[Approximation of Autonomous-ODE flow endpoints]
\label{appendix:lem:ODE flow endpoint approximation}
Assume \(\NODEJacobiFuncClass \subset \Lipsp{}\) is a $\sup$-universal approximator for $\Lipsp{}$.
Then, \(\ODEFlowEnds{\NODEJacobiFuncClass{}}\) is a \(\sup\)-universal approximator for \(\ODEFlowEnds{\Lipsp{}}\).
\end{lemma}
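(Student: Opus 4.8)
The plan is to prove the lemma directly through the classical principle that closeness of vector fields implies closeness of their time-$1$ flows, quantified by \Gronwall's inequality; the one genuine subtlety is that the hypothesis on $\NODEJacobiFuncClass$ only supplies approximation on \emph{compact} sets, so I must keep the approximate trajectory inside the region where $f$ and its approximant are known to be close. Fix $f \in \Lipsp$, a compact $K \subset \R^d$, and $\varepsilon > 0$, and write $\phi := \IVP{f}{\cdot}{1} \in \ODEFlowEnds{\Lipsp}$ for the target. Since $\NODEJacobiFuncClass$ is a $\sup$-universal approximator for $\Lipsp$, every candidate $g$ we choose lies in $\NODEJacobiFuncClass \subset \Lipsp$, so by Fact~\ref{fact:ODE solution exists for Lip} the initial value problem for $g$ has a unique global solution and $\IVP{g}{\cdot}{1} \in \ODEFlowEnds{\NODEJacobiFuncClass}$ is a legitimate approximator. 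It therefore suffices to produce $g \in \NODEJacobiFuncClass$ with $\supKnorm{\phi - \IVP{g}{\cdot}{1}} < \varepsilon$.

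First I would fix the compact region on which to invoke the approximation hypothesis. Because $(\x, t) \mapsto \IVP{f}{\x}{t}$ is continuous (continuous dependence on the initial condition for a Lipschitz field) and $K \times [0,1]$ is compact, the set $K_1 := \{\IVP{f}{\x}{t} : \x \in K,\ t \in [0,1]\}$ is compact. Let $K_2 := \{y \in \R^d : \mathrm{dist}(y, K_1) \le 1\}$, which is again compact, and let $\LipConst{f}$ be a Lipschitz constant of $f$. Choosing $\delta > 0$ with $\delta e^{\LipConst{f}} < \min(\varepsilon, 1)$ and applying the hypothesis on the compact set $K_2$, I obtain $g \in \NODEJacobiFuncClass$ with $\supRangenorm{K_2}{f - g} < \delta$.

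The core estimate is as follows. Fix $\x_0 \in K$ and set $\targetError{t} := \Euclideannorm{\IVP{f}{\x_0}{t} - \IVP{g}{\x_0}{t}}$, so $\targetError{0} = 0$. As long as $\IVP{g}{\x_0}{s} \in K_2$ for all $s \in [0,t]$, writing the difference of the two trajectories as the integral of $f(\IVP{f}{\x_0}{s}) - g(\IVP{g}{\x_0}{s})$ and splitting this into $\bigl[f(\IVP{f}{\x_0}{s}) - f(\IVP{g}{\x_0}{s})\bigr] + \bigl[f(\IVP{g}{\x_0}{s}) - g(\IVP{g}{\x_0}{s})\bigr]$ gives, after bounding the first bracket by $\LipConst{f}\,\targetError{s}$, the second by $\delta$, and using $t \le 1$, the inequality $\targetError{t} \le \delta + \LipConst{f}\int_0^t \targetError{s}\, ds$; hence \Gronwall's inequality yields $\targetError{t} \le \delta e^{\LipConst{f} t} \le \delta e^{\LipConst{f}}$ on that interval. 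The main obstacle, and the reason for the choices above, is to turn this \emph{conditional} bound into an unconditional one, i.e. to rule out escape of the approximate trajectory from $K_2$. For this I would run a continuity (bootstrap) argument on $T := \sup\{t \in [0,1] : \IVP{g}{\x_0}{s} \in K_2 \text{ for all } s \in [0,t]\}$: on $[0, T)$ the estimate gives $\targetError{t} \le \delta e^{\LipConst{f}} < 1$, so $\IVP{g}{\x_0}{t}$ stays within distance strictly less than $1$ of $\IVP{f}{\x_0}{t} \in K_1$, hence in the interior of $K_2$; by continuity the same holds at $t = T$, so $T$ could be increased unless $T = 1$. Thus $T = 1$ and $\targetError{t} \le \delta e^{\LipConst{f}}$ throughout $[0,1]$.

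Finally, the constants $\LipConst{f}$, $K_1$, $K_2$, $\delta$, and the chosen $g$ are all independent of the initial point, so the bound $\targetError{1} \le \delta e^{\LipConst{f}} < \varepsilon$ holds uniformly over $\x_0 \in K$. Taking the supremum over $K$ gives $\supKnorm{\phi - \IVP{g}{\cdot}{1}} \le \delta e^{\LipConst{f}} < \varepsilon$, which is precisely the $\sup$-universality of $\ODEFlowEnds{\NODEJacobiFuncClass}$ for $\ODEFlowEnds{\Lipsp}$, completing the proof.
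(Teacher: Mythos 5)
Your proof is correct and takes essentially the same approach as the paper's: enlarge the flow tube $\IVP{f}{K}{[0,1]}$ to a compact neighborhood, invoke the $\sup$-approximation hypothesis there, and combine \Gronwall{}'s inequality with a continuity argument that confines the approximate trajectory to that neighborhood. The only differences are cosmetic — you use a radius-$1$ enlargement and a bootstrap on $T = \sup\{t : \text{trajectory stays in } K_2\}$, while the paper uses radius $2e^{\LipConst{F}}$ and a contradiction via $\tau = \inf\{t : \targetError{t} \geq 2\Bound{}\}$; these are equivalent formulations of the same escape-time argument.
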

\begin{proof}
Let $\phi \in \ODEFlowEnds{\Lipsp{}}$. Then, by definition, there exists $F \in \Lipsp{}$ such that $\phi = \IVP{F}{\cdot}{1}$.
Let $\LipConst{F}$ denote the Lipschitz constant of $F$.
In the following, we approximate $\IVP{F}{\cdot}{1}$ by approximating $F$ using an element of $\NODEJacobiFuncClass{}$.

Let $\varepsilon > 0$, and let $K \subset \R^d$ be a compact subset of $\R^d$.
We show that there exists $f \in \NODEJacobiFuncClass{}$ such that $\supKnorm{\IVP{F}{\cdot}{1} - \IVP{f}{\cdot}{1}} < \varepsilon$. Note that $\IVP{f}{\cdot}{\cdot}$ is well-defined because $\NODEJacobiFuncClass{} \subset \Lipsp{}$.
Define
\[
K' := \left\{\x \in \R^d \ \bigg|\ \inf_{\y \in \IVP{F}{K}{[0, 1]}} \|\x - \y\| \leq 2 e^{\LipConst{F}}\right\}.
\]
Then, $K'$ is compact. This follows from the compactness of $\IVP{F}{K}{[0, 1]}$: (i) $K'$ is bounded since $\IVP{F}{K}{[0, 1]}$ is bounded, and (ii) it is closed since the function $\min_{\y \in \IVP{F}{K}{[0, 1]}} \|\x - \y\|$ is continuous and hence $K'$ is the inverse image of a closed interval $[0, 2e^{\LipConst{F}}]$ by a continuous map.

Since $\NODEJacobiFuncClass{}$ is assumed to be a \(\sup\)-universal approximator for $\Lipsp{}$, for any $\delta > 0$, we can take $f \in \NODEJacobiFuncClass{}$ such that $\supRangenorm{K'}{f - F} < \delta$.
Let $\delta$ be such that $0 < \delta < \min\{\varepsilon / (2e^{\LipConst{F}}), 1\}$, and take such an $f$.

Fix $\x_0 \in K$ and define $\targetError{t} := \|\IVP{F}{\x_0}{t} - \IVP{f}{\x_0}{t}\|$.
Let $\Bound{} := \BoundDef{}$ and we show that
\[
\targetError{t} < 2\Bound{}
\]
holds for all $t \in [0, 1]$.
We prove this by contradiction. Suppose that there exists $t'$ for which the inequality does not hold. Then, the set $\mathcal{T} := \{t \in [0, 1] | \targetError{t} \geq 2 \Bound{}\}$ is not empty and
 thus $\tau := \inf \mathcal{T} \in [0, 1]$.
For this $\tau$, we show both $\targetError{\tau} \leq \Bound{}$ and $\targetError{\tau} \geq 2\Bound{}$.
First, we have
\begin{align*}
\targetError{\tau} &= \left\|\IVP{F}{\x_0}{\tau} - \IVP{f}{\x_0}{\tau}\right\| \\
&= \left\|\x_0 + \int_0^\tau F(\IVP{F}{\x_0}{t}) dt - \x_0 - \int_0^\tau f(\IVP{f}{\x_0}{t}) dt\right\| \\
&\leq \left\|\int_0^\tau (F(\IVP{F}{\x_0}{t}) - F(\IVP{f}{\x_0}{t})) dt\right\| \\
&\qquad + \left\|\int_0^\tau (F(\IVP{f}{\x_0}{t}) - f(\IVP{f}{\x_0}{t})) dt\right\|.
\end{align*}
The last term can be bounded as
\[
\left\|\int_0^\tau (F(\IVP{f}{\x_0}{t}) - f(\IVP{f}{\x_0}{t})) dt\right\| \leq \int_0^\tau \delta dt
\]
because of the following argument.
If $\tau = 0$, then both sides equal to zero, hence it holds with equality.
If $\tau > 0$, then for any $t < \tau$, we have $\IVP{f}{\x_0}{t} \in K'$ because $t < \tau$ implies $\targetError{t} \leq 2 \Bound{}$.
In this case, $\supRangenorm{K'}{F - f} < \delta$ implies the inequality.
Therefore, we have
\[
\targetError{\tau} \leq \LipConst{F}\int_0^\tau \targetError{t} dt + \int_0^\tau \delta dt.
\]
Now, by applying \Gronwall{}'s inequality \cite{GronwallNote1919}, we obtain
\[
\targetError{\tau} \leq \delta \tau e^{\LipConst{F} \tau} \leq \Bound{}.
\]
On the other hand, by the definition of $\mathcal{T}$ and the continuity of $\targetError{\cdot}$, we have $\targetError{\tau} \geq 2 \Bound{}$.
These two inequalities contradict.

Therefore, $\supKnorm{\IVP{F}{\cdot}{1} - \IVP{f}{\cdot}{1}} = \sup_{\x_0 \in K} \targetError{1} \leq 2 \Bound{} = 2\BoundDef{}$ holds.
Since $\delta < \varepsilon / (2e^{\LipConst{F}})$, the right-hand side is smaller than $\varepsilon$.
\end{proof}

Finally, we display a lemma that is useful in the case of $d = 1$. It is proved by convolving a smooth bump-like function.
\begin{fact}[Lemma~11 of \citet{TeshimaCouplingbased2020}]
\label{fact: d=1 smoothing}
Let $\tau: \Re \to \Re$ be a strictly increasing continuous function. Then, for any compact subset $K \subset \Re$ and any $\varepsilon > 0$, there exists a strictly increasing $C^\infty$-function $\tilde \tau$ such that \[\supKnorm{\tau - \tilde \tau} < \varepsilon.\]
\end{fact}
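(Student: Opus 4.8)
The plan is to obtain $\tilde\tau$ by mollifying $\tau$, i.e., by convolving it with a smooth nonnegative bump function, and then to verify the three required properties—smoothness, strict monotonicity, and uniform closeness on $K$—from elementary properties of the convolution. Concretely, I would fix a mollifier $\rho \in C_c^\infty(\Re)$ with $\rho \geq 0$, $\supp{\rho} \subset [-1, 1]$, and $\int_\Re \rho\, dy = 1$, and for $\delta > 0$ set $\rho_\delta(y) := \delta^{-1}\rho(y/\delta)$, so that $\rho_\delta \geq 0$, $\supp{\rho_\delta} \subset [-\delta, \delta]$, and $\int_\Re \rho_\delta\, dy = 1$. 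The candidate approximant is
\[
\tilde\tau(x) := \int_\Re \tau(x - y)\,\rho_\delta(y)\, dy = \int_\Re \tau(y)\,\rho_\delta(x-y)\, dy,
\]
with $\delta$ to be chosen at the end depending on $\varepsilon$ and $K$.

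For smoothness, I would use the second form of the integral and differentiate under the integral sign: since $\tau$ is continuous (hence locally integrable) and $\rho_\delta$ is $\Cinfty$ with compact support, dominated convergence justifies $\tilde\tau^{(k)}(x) = \int_\Re \tau(y)\,\rho_\delta^{(k)}(x-y)\, dy$ for every $k$, and each such integral is continuous in $x$; thus $\tilde\tau \in \Cinfty$. Note that this argument uses no differentiability of $\tau$—all derivatives are passed onto the smooth kernel. For strict monotonicity, I would use the first form of the integral: if $x_1 < x_2$, then $x_1 - y < x_2 - y$ for every $y$, so strict monotonicity of $\tau$ gives $\tau(x_1 - y) < \tau(x_2 - y)$, and integrating against $\rho_\delta$, which is nonnegative and strictly positive on a set of positive measure, yields $\tilde\tau(x_1) < \tilde\tau(x_2)$. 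Hence $\tilde\tau$ is strictly increasing.

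For the uniform estimate, using $\int_\Re \rho_\delta\, dy = 1$ I would write
\[
\tilde\tau(x) - \tau(x) = \int_\Re \big(\tau(x-y) - \tau(x)\big)\,\rho_\delta(y)\, dy.
\]
Let $K' := \{x \in \Re : \operatorname{dist}(x, K) \leq 1\}$, which is compact, so $\tau$ is uniformly continuous on $K'$; thus there is $\delta_0 \in (0, 1]$ with $|\tau(u) - \tau(v)| < \varepsilon$ whenever $u, v \in K'$ and $|u - v| \leq \delta_0$. Choosing $\delta := \delta_0$ and restricting to $x \in K$, the integrand is supported on $|y| \leq \delta \leq 1$, so $x - y \in K'$ and $|(x-y) - x| \leq \delta_0$, giving $|\tau(x-y) - \tau(x)| < \varepsilon$ and hence
\[
|\tilde\tau(x) - \tau(x)| \leq \int_\Re |\tau(x-y) - \tau(x)|\,\rho_\delta(y)\, dy < \varepsilon \int_\Re \rho_\delta\, dy = \varepsilon,
\]
so that $\supKnorm{\tau - \tilde\tau} < \varepsilon$.

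The argument is essentially routine mollification, so I do not expect a serious obstacle; the only points that require genuine care are that \emph{strict} (rather than merely weak) monotonicity survives the convolution, which is why the kernel must be nonnegative and positive on a set of positive measure, and that the smoothness of $\tilde\tau$ comes entirely from the kernel, so that no regularity of $\tau$ beyond continuity is needed.
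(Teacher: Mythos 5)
Your proposal is correct and follows exactly the route the paper indicates for this fact, which it imports from \citet{TeshimaCouplingbased2020} with the remark that ``it is proved by convolving a smooth bump-like function''---precisely your mollification argument, with the same three verifications (smoothness from the kernel, strict monotonicity from nonnegativity of the kernel, uniform closeness from uniform continuity on a compact neighborhood of $K$). The only cosmetic point is the final strict inequality: to pass from the pointwise bound $|\tilde\tau(x)-\tau(x)|<\varepsilon$ to $\supKnorm{\tau-\tilde\tau}<\varepsilon$ you should either run the estimate with $\varepsilon/2$ or note that the relevant supremum is attained by compactness.
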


 \subsection{Proof of Theorem~\ref{thm: NODE is sup-universal}}
\label{sec:appendix:universality-proof-content}
\begin{proof}[Proof of Theorem~\ref{thm: NODE is sup-universal}]
Let $F \colon U \to \ReD$ be an element of $\CtwoDomainDiff$. 
Take any compact set $K\subset U$ and $\varepsilon>0$.
First, thanks to Fact~\ref{red to comp. supp. diff}, there exists a $G \in \DcRD$ and an affine transform $W \in \FLin$ such that \[\Restrict{W\circ G}{K}=\Restrict{F}{K}.\]
Now, if $d \geq 2$, then $2 \neq d + 1$, hence we can immediately use Fact~\ref{fact: simplicity} and Lemma~\ref{lem: diffc2 is generated by flow endpoints} to show that there exists a finite set of flow endpoints (Definition~\ref{def: appendix flow endpoints}) $g_1, \ldots, g_k \in \FlowEnds{2}$ such that
\[
G = g_k \circ \cdots \circ g_1.
\]
On the other hand, if $d = 1$, by Fact~\ref{fact: d=1 smoothing}, for any $\delta > 0$, we can find $\tilde G$ that is a $C^\infty$-diffeomorphism on $\Re$ such that $\supKnorm{G - \tilde G} < \delta$. Without loss of generality, we may assume that $\tilde G$ is compactly supported so that $\tilde G \in \DcRDCmd{\infty}$.
Then, we can use Fact~\ref{fact: simplicity} and Lemma~\ref{lem: diffc2 is generated by flow endpoints} to show that there exists a finite set of flow endpoints (Definition~\ref{def: appendix flow endpoints}) $g_1, \ldots, g_k \in \FlowEnds{\infty}$ such that
\[
\tilde G = g_k \circ \cdots \circ g_1.
\]

We now construct $f_j \in \Lipsp{}$ such that $g_j = \IVP{f_j}{\cdot}{1}$.
By Definition~\ref{def: appendix flow endpoints}, for each $g_j$ ($1 \leq j \leq k$), there exists an associated flow $\Phi_j$.
Now, define 
\[f_j(\cdot):=\left.\frac{\partial  \Phi_j(\cdot,t)}{\partial t}\right|_{t=0}.
\]
Then, $f_j \in \Lipsp{}$ because it is a compactly-supported $C^1$-map:
it is compactly supported since there exists a compact subset $K_j \subset \R^d$ containing the support of $\Phi(\cdot, t)$ for all $t$, and hence $\Phi(\cdot, t) - \Phi(\cdot, 0)$ is zero in the complement of $K_j$.

Now, $\Phi_j(\x, t) = \IVP{f_j}{\x}{t}$
since, by additivity of the flows,
\begin{align*}
    \frac{\partial \Phi_j}{\partial t}(\x,t)&=\lim_{s\rightarrow 0}\frac{\Phi_j(\x,t+s)-\Phi_j(\x,t)}{s}
    = \lim_{s\rightarrow 0}\frac{\Phi_j(\Phi_j(\x,t),s)-\Phi_j(\Phi_j(\x,t), 0)}{s}\\
    &= \left.\frac{\partial  \Phi_j(\Phi_j(\x,t),s)}{\partial s}\right|_{s=0}
    = f_j(\Phi_j(\x,t)),
\end{align*}
and hence it is a solution to the initial value problem that is unique.
As a result, we have $g_j = \Phi_j(\cdot, 1) = \IVP{f_j}{\cdot}{1}$.

By combining Fact~\ref{lemma:composition} and Lemma~\ref{appendix:lem:ODE flow endpoint approximation}, there exist $\phi_1, \ldots, \phi_k \in \Psi(\NODEJacobiFuncClass{})$ such that
\[
\supKnorm{g_k \circ \cdots \circ g_1 - \phi_k \circ \cdots \circ \phi_1} < \frac{\varepsilon}{\opnorm{W}},
\]
where $\opnorm{\cdot}$ denotes the operator norm.
Therefore, we have that $W \circ \phi_k \circ \cdots \circ \phi_1 \in \INNHNODE$ satisfies \begin{align*}
\supKnorm{F - W \circ \phi_k \circ \cdots \circ \phi_1}
&= \supKnorm{W \circ G - W \circ \phi_k \circ \cdots \circ \phi_1} \\
&\leq \opnorm{W}\supKnorm{g_k \circ \cdots \circ g_1 - \phi_k \circ \cdots \circ \phi_1} \\
&< \varepsilon
\end{align*}
if $d \geq 2$.
For $d = 1$, it can be shown that there exists $W \circ \phi_k \circ \cdots \circ \phi_1 \in \INNHNODE$ that satisfies $\supKnorm{F - W \circ \phi_k \circ \cdots \circ \phi_1} < \varepsilon$ in a similar manner.
\end{proof}

\section{Terminal time of autonomous-ODE flow endpoints}
\label{sec:appendix:remark-terminal-time}

In Definition~\ref{def: autonomous ODE flow endpoints}, the choice of the terminal value of the time variable, $t = 1$, is only technical.
To see this, let $T > 0$.
If we consider $w: \Re \to \ReD$ that is the solution of the initial value problem
$w(0) = \x, \dot{w}(t) = (T f)(w(t)) \ (t \in \Re)$
as well as $z: \Re \to \R^d$ that is the unique solution to
$z(0) = \x, \dot{z}(t) = f(z(t)) \ (t \in \Re)$,
then $w(t) = z(Tt)$ holds.
Therefore, $\IVP{f}{\x}{Tt} = \IVP{Tf}{\x}{t}$.

As a result, $\IVP{f}{\x}{T} = \IVP{Tf}{\x}{1}$ holds.
Therefore, it holds that
\[
\{\IVP{f}{\cdot}{T} \ |\ f\in \mathcal{F}\}
= \{\IVP{Tf}{\cdot}{1} \ |\ f\in \mathcal{F}\}
= \ODEFlowEnds{T\mathcal{F}}.
\]

Thus, even if we consider $T \neq 1$, if the set $\mathcal{F}$ is a cone, the set of the autonomous-ODE flow endpoints remains the same.

\section{Comparison between \(L^p\)-universality and \(\sup\)-universality}
In this section, we discuss the advantage of having a representation power guarantee in terms of the \(\sup\)-norm instead of the \(L^p\)-norm in function approximation tasks.

Roughly speaking, the function approximation should be robust under a slight change of norms, but $L^p$-universal approximation property can be sensitive to the choice of $p$.
To make this point, we construct an example: even if a model $g$ sufficiently approximates a target $f$ with the norm $\|\cdot\|_{1,K}$, the model $g$ may fail to approximate $f$ with $\|\cdot\|_{p,K}$ for any $p > 1$, even if $p$ is very close to $1$.

Let $h: (0,1) \rightarrow \Re$ be a strictly increasing function such that
\[
\begin{cases}
\|h\|_{p',[0,1]} &< \infty \text{ if } p'=1, \\
\|h\|_{p',[0,1]} &= \infty \text{ if } p'>1.
\end{cases}
\]
For example, $h(x)=-\sum_{k=1}^\infty x^{1/k-1}/k^3$ satisfies this condition.
Then, we define
\[g_n(x)=x+\frac{h(x)}{n}.\]

Now, the sequence $\{g_n\}_{n = 1}^\infty$ approximates $\Identity$ in $L^1$-norm in the sense that for any small $\varepsilon >0$, for sufficiently large $N$, it holds that
\begin{align}
    \|g_N - {\rm Id}\|_{1,[0,1]} &<\varepsilon, \label{siki for L1}\\
\end{align}
However, the same $g_N$ fails to approximate $\Identity$ in $L^p$-norm ($p > 1$) since it always holds that, for sufficiently small $\delta \in (0, 1/2)$,
\begin{align}
    \|g_N - {\rm Id}\|_{p,[\delta,1-\delta]} &\ge 1. \label{siki for Lp}
\end{align}
This example highlights that fixing $p$ first and guaranteeing approximation in $L^p$-norm may not suffice for guaranteeing the approximation in $L^{p'}$-norm ($p' > p$). On the other hand, having a guarantee in $\sup$-norm suffices for providing an approximation guarantee in $L^p$-norm for $p \ge 1$ simultaneously.
 
\end{appendices}
 
\end{document}